\declaretheorem[name=Definition]{definition}
\declaretheorem[name=Problem]{problem}
\declaretheorem[name=Theorem]{theorem}
\newcommand{\R}{\mathbb{R}}
\newcommand{\N}{\mathbb{N}}
\newcommand{\feats}{\mathcal{X}}
\newcommand{\labels}{\mathcal{Y}}
\newcommand{\dataset}{\mathcal{D}}
\newcommand{\dtrain}{\dataset_{\textit{train}}}
\newcommand{\dtest}{\dataset_{\textit{test}}}
\newcommand{\spread}{\psi_p}
\newcommand{\revise}[1]{#1}
\newcommand{\dist}{\textit{dist}}
\newcommand{\pred}[1]{\widehat{#1}}
\newcommand{\tool}{CARVE-GBM}
\newcommand{\extendedN}{\N \cup \{0\}}
\title{Verifiable Boosted Tree Ensembles}
\author{Anonymous submission}
\date{}
\begin{document}

\author{
{\rm Stefano Calzavara$^{*}$}\\
Università Ca' Foscari Venezia\\
stefano.calzavara@unive.it
\and
{\rm Lorenzo Cazzaro$^{*}$}\\
Università Ca' Foscari Venezia\\
lorenzo.cazzaro@unive.it
\and
{\rm Claudio Lucchese$^{*}$}\\
Università Ca' Foscari Venezia\\
claudio.lucchese@unive.it
\and
{\rm Giulio Ermanno Pibiri$^{*}$}\\
Università Ca' Foscari Venezia\\
giulioermanno.pibiri@unive.it
}

\maketitle
{\let\thefootnote\relax\footnote{{$^*$Equal contribution.}}}
\begin{abstract}
Verifiable learning advocates for training machine learning models amenable to efficient security verification. Prior research demonstrated that specific classes of decision tree ensembles -- called \emph{large-spread ensembles} -- allow for robustness verification in polynomial time against any norm-based attacker. This study expands prior work on verifiable learning from basic ensemble methods
(i.e., hard majority voting) to advanced \textit{boosted tree ensembles}, such as those trained using XGBoost or LightGBM. Our formal results indicate that robustness verification is achievable in polynomial time when considering attackers based on the $L_\infty$-norm, but remains NP-hard for other norm-based attackers. Nevertheless, we present a pseudo-polynomial time algorithm to verify robustness against attackers based on the $L_p$-norm for any $p \in \extendedN$, which in practice grants excellent performance. Our experimental evaluation shows that large-spread boosted ensembles are accurate enough for practical adoption, while being amenable to efficient security verification.
\end{abstract}

\section{Introduction}
Security of Machine Learning (ML) is a hot topic nowadays, because models trained using classic supervised learning algorithms proved vulnerable to \emph{evasion attacks}, i.e., malicious perturbations of inputs designed to force mispredictions at test time~\cite{BiggioCMNSLGR13,SzegedyZSBEGF13,DemetrioCBLAR21}. When ML models are deployed in adversarial settings, standard performance measures such as accuracy, precision and recall do not provide appropriate guarantees, because they do not take adversarial perturbations into account. This motivated a long research line on adversarial ML and the definition of new measures such as \emph{robustness}, which explicitly quantifies resistance to evasion attacks~\cite{MadryMSTV18}.

Unfortunately, verifying the security of ML models against evasion attacks is computationally hard, because verification must consider all the possible adversarial perturbations that the attacker may perform. In this work, we focus on the security of \emph{tree ensembles}~\cite{BreimanFOS84}, a popular class of ML models particularly effective for non-perceptual classification tasks.
Kantchelian et al.~\cite{KantchelianTJ16} were the first to prove that the robustness verification problem for tree ensembles is NP-complete when malicious perturbations are modeled by a norm. Follow-up work~\cite{WangZCBH20} extended this negative result to stump ensembles, i.e., ensembles including just trees of depth one, and proposed \emph{approximate} verification approaches, which can formally prove the absence of evasion attacks, but may incorrectly report evasion attacks also for secure inputs. Exact verification approaches against specific attackers, e.g., modeled in terms of the $L_\infty$-norm, have also been proposed~\cite{ChenZS0BH19,RanzatoZ20}. Yet, such approaches have to deal with the NP-hardness of robustness verification and are doomed to fail when the size of the tree ensembles is large.

To improve over this bleak picture, recent work proposed \emph{verifiable learning} for tree ensembles~\cite{CalzavaraCPP23}. The key idea of verifiable learning is the development of new training algorithms that learn restricted classes of models amenable for efficient security verification, e.g., in polynomial time. Although promising, prior work is still limited in scope because it assumes the adoption of simple ensemble methods based on hard majority voting, i.e., each tree in the ensemble makes a class prediction and the most frequent class is returned by the ensemble. State-of-the-art ensemble methods such as \emph{gradient boosting}~\cite{friedman2001greedy} operate rather differently, because each tree in the ensemble is a \emph{regressor} predicting a real-valued score; the ensemble prediction is then performed by summing together the individual scores and translating the result into the final class prediction. This makes it difficult to generalize existing work on verifiable learning to boosted tree ensembles.

\paragraph*{Contributions.}
We here summarize our contributions:
\begin{enumerate}
    \item We extend existing research on verifiable learning~\cite{CalzavaraCPP23} from simple ensemble methods based on hard majority voting to state-of-the-art boosted tree ensembles, e.g., those trained using LightGBM~\cite{lightgbm}. Our analysis shows that a restricted class of tree-based models, called \emph{large-spread boosted ensembles}, admit exact security verification in polynomial time when considering attacks based on the $L_\infty$-norm. We then prove a negative result: security verification remains NP-hard even for our restricted class of models when considering other norm-based attackers. Still, we present a pseudo-polynomial time algorithm to verify robustness against attackers based on the $L_p$-norm for any $p \in \extendedN$, which in practice grants excellent performance (Sections~\ref{sec:verification} and~\ref{sec:solving}).
    
    \item We implement our efficient verification algorithms for large-spread boosted ensembles and we propose a new training algorithm for such models, deployed as a simple extension of the popular LightGBM library. To support reproducible research, we will make our software publicly available on GitHub upon paper acceptance (Section~\ref{sec:implementation}).
    
    \item We perform an extensive experimental evaluation on public datasets to show the accuracy and robustness of large-spread boosted ensembles with respect to traditional boosted ensembles. The net result is that our models are accurate and robust enough for practical adoption, while being amenable to efficient security verification. Moreover, we show that existing verification algorithms for standard boosted ensembles are much less efficient than our algorithms, which enjoy a speedup of up two orders of magnitude, and they may seriously struggle against the largest models, even when provided with significant computational resources (Section~\ref{sec:experiments}).
\end{enumerate}
\section{Background}
We here introduce the key technical ingredients required to appreciate the paper. For readability, Table~\ref{tab:notation} summarizes the main notation used in the paper.


\begin{table}[t]
    \centering
    \begin{tabular}{c|l}
    \toprule
    $\vec{x}$ &  Instance drawn from the feature space $\feats$ \\
    $x_i$ & $i$-th component of the vector $\vec{x}$ \\
    $y$ & Class label drawn from the set of labels $\labels$ \\
    $d$ & Number of features of $\vec{x}$ (i.e., dimensionality of $\feats$) \\
    $t$ & Regression tree \\
    $T$ & Boosted tree ensemble \\
    $N$ & Number of nodes of a boosted tree ensemble \\
    $m$ & Number of trees of a boosted tree ensemble \\
    $\vec{\delta}$ & Adversarial perturbation \\
    $A_{p,k}$ & Attacker based on $L_p$-norm (max perturbation $k$) \\
    \bottomrule
    \end{tabular}
\caption{Summary of notation.}
\label{tab:notation}
\end{table}

\subsection{Supervised Learning}
Let $\feats \subseteq \R^d$ be a $d$-dimensional vector space of real-valued \textit{features}. An \emph{instance} $\vec{x} \in \feats$ is a $d$-dimensional feature vector $\langle x_1, x_2, \ldots, x_d \rangle$ representing an object in the vector space $\feats$. Each instance is assigned a class label $y \in \labels$ by an unknown \emph{target} function $f: \feats \rightarrow \labels$. In this work, we focus on binary classification, i.e., we let $\labels = \{+1,-1\}$ include just a positive and a negative class. Multi-class classification problems can be encoded in terms of binary classification by using standard techniques like one-vs-one and one-vs-rest.

Supervised learning algorithms automatically learn a \emph{classifier} $g: \feats \rightarrow \labels$ from a \emph{training set} of correctly labeled instances $\dtrain = \{(\vec{x}_i,f(\vec{x}_i))\}_i$, with the goal of approximating the target function $f$ as accurately as possible. The performance of classifiers is normally estimated on a \emph{test set} of correctly labeled instances $\dtest = \{(\vec{z}_i,f(\vec{z}_i))\}_i$, disjoint from the training set, yet drawn from the same data distribution. For example, the standard \emph{accuracy} measure $a(g,\dtest)$ counts the percentage of test instances where the classifier $g$ returns a correct prediction.

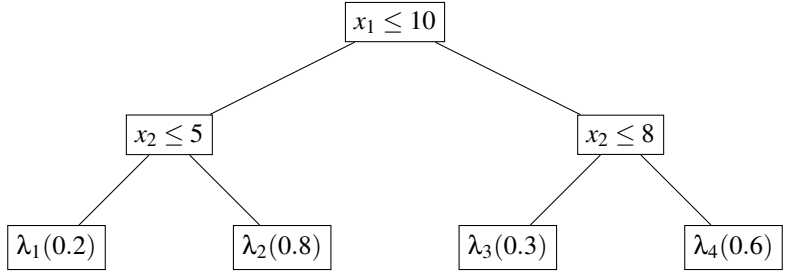
\begin{figure*}[t]
\centering
\begin{tikzpicture}[level 1/.style={sibling distance=6cm},level 2/.style={sibling distance=3cm}]
\tikzstyle{every node}=[rectangle,draw]
\node{$x_1 \leq 10$}
	child { node {$x_2 \leq 5$}
	        child { node {$\lambda_1(0.2)$}}
	        child { node {$\lambda_2(0.8)$}} }
	child { node {$x_2 \leq 8$}
		    child { node {$\lambda_3(0.3)$}}
	        child { node {$\lambda_4(0.6)$}} }
;
\end{tikzpicture}
\caption{Example of regression tree.}
\label{fig:tree}
\end{figure*}

\subsection{Boosted Tree Ensembles}
\label{sec:ensembles}
A \emph{regression tree} $t: \feats \rightarrow \R$ can be inductively defined as follows: $t$ is either a leaf $\lambda(s)$ for some real-valued score $s \in \R$ or an internal node $\sigma(f,v,t_l,t_r)$, where $f \in \{1,\ldots,d\}$ identifies a feature, $v \in \R$ is a threshold for the feature, and $t_l,t_r$ are regression trees (left and right child). At test time, the instance $\vec{x}$ traverses the regression tree $t$ as follows: starting from the root of $t$, for each traversed tree node $\sigma(f,v,t_l,t_r)$, $\vec{x}$ falls into the left sub-tree $t_l$ if $x_f \leq v$ and into the right sub-tree $t_r$ otherwise, until it eventually reaches a leaf $\lambda(s)$. We write $t(\vec{x}) = s$ when $\vec{x}$ reaches a leaf $\lambda(s)$ of the tree $t$ upon prediction and we refer to the score $s$ as the \emph{raw prediction} of $t$ on $\vec{x}$. Raw predictions might have multiple interpretations, representing, e.g., the probability of belonging to the positive class or the predicted value in case of a regression task. For example, Figure~\ref{fig:tree} represents a regression tree $t$ of depth 2 where scores range in the interval [0,1] to represent the probability of belonging to the positive class. In this case $t(\langle 8,6 \rangle) = 0.8$ and $t(\langle 12,7 \rangle) = 0.3$.

A \emph{boosted tree ensemble} $T: \feats \rightarrow \labels$ is a classifier built on top of a set of regression trees $\{t_1,\ldots,t_m\}$, which aggregates individual raw predictions to produce a single class prediction $T(\vec{x})$. We use the term \emph{boosted} to stress that these ensembles are assumed to have been trained using state-of-the-art boosting algorithms like AdaBoost~\cite{adaboost}, Gradient Boosting~\cite{friedman2001greedy}, and its popular variants such as LightGBM~\cite{lightgbm} and XGBoost~\cite{xgboost}; for readability, we often use the terms ``tree ensembles'' or even just ``ensembles'' in the following. 

Given an instance $\vec{x}$, the ensemble $T$ computes the class prediction $T(\vec{x})$ as follows. First, the ensemble computes the raw prediction $\pred{T}(\vec{x}) = \sum_{i=1}^m t_i(\vec{x})$. The raw prediction $\pred{T}(\vec{x})$ is then transformed by an \emph{inverse link} function $\iota: \R \rightarrow \R$, e.g., the \emph{logit} function when individual tree scores are probabilities, and compared against a threshold $\tau \in \R$: if $\iota(\pred{T}(\vec{x})) \geq \tau$ then $T(\vec{x}) = +1$, otherwise $T(\vec{x}) = -1$. We do not assume any specific choice of $\iota$, but we require $\iota$ to be monotonically increasing, i.e., higher scores of the raw prediction push the prediction towards the positive class.

\subsection{Classifier Robustness}
Robustness is a popular measure used to estimate the performance of classifiers deployed in an adversarial setting. It requires the classifier to perform a correct prediction on a test instance $\vec{x}$ and stick to the same prediction for any possible \emph{evasion attack} attempt crafted from $\vec{x}$, e.g., by adding some maliciously crafted perturbation $\vec{\delta} \in \R^d$ to it. We model the \emph{attacker} $A: \feats \rightarrow 2^{\feats}$ as a function from instances to sets of instances, defining the possible evasion attacks against them. We assume that $\vec{x} \in A(\vec{x})$ for all instances $\vec{x} \in \feats$, i.e., the attacker can always leave the original instances unchanged.

\begin{definition}[Robustness]
The classifier $g$ is \emph{robust} against the attacker $A$ on the instance $\vec{x}$ with true label $y$ if and only if $\forall \vec{z} \in A(\vec{x}): g(\vec{z}) = y$.
\end{definition}

Based on the definition of robustness, for a given attacker $A$, we can define the robustness measure $r_A(g,\dtest)$ by computing the percentage of test instances where the classifier $g$ is robust. In the following, we focus on attackers represented in terms of an arbitrary norm, i.e., the attacker's capabilities are defined by some norm function and a maximum perturbation $k$. Concretely, we consider the attacker defined as $A_{p,k}(\vec{x}) = \{\vec{z} \in \feats ~|~ ||\vec{z} - \vec{x}||_p \leq k\}$ for some budget $k$ and (using a little abuse of notation)
$p \in \N \cup \{0,\infty\}$.

\subsection{Robustness Verification of Tree Ensembles}
\label{sec:verification_decision_trees}
The robustness verification problem for tree ensembles is NP-hard for any norm-based attacker~\cite{KantchelianTJ16}. Although different heuristics have been proposed to tame this computational complexity~\cite{KantchelianTJ16,ChenZS0BH19,WangZCBH20}, NP-hardness still constitutes a roadblock to verification when the model size grows.

To cope with this complexity,
recent work identified a restricted class of tree ensembles -- known as \emph{large-spread ensembles} -- admitting robustness verification in polynomial time~\cite{CalzavaraCPP23}. This positive result assumes ensembles based on hard majority voting, where each tree makes its own class prediction and the ensemble returns the most frequently predicted class. As we explained in Section~\ref{sec:ensembles}, the boosted tree ensembles considered in this paper operate rather differently, because they add together raw predictions (scores) and use an inverse link function $\iota$ to determine the class prediction through thresholding. 

The key characteristic of large-spread ensembles is that the thresholds chosen for different trees are sufficiently far away that each feature can be successfully attacked in at most one tree, hence the attacks against a large-spread ensemble can be decomposed into a sum of orthogonal attacks against the individual trees. This permits to compose the security analysis of the individual trees to draw conclusions about the robustness of the entire ensemble, thus enabling efficient verification. The formal definition of large-spread ensemble is given below.

\begin{definition}[Large-Spread Ensemble~\cite{CalzavaraCPP23}]
\label{def:spread}
Given the ensemble $T = \{t_1, \dots, t_m\}$, its $p$-spread $\spread(T)$ is the minimum value $||v-v'||_p$ computed for any $v,v'$ such that there exists two different trees $t,t' \in T$ such that $\sigma(f,v,t_l,t_r) \in t$ and $\sigma(f,v',t_l',t_r') \in t'$ for some $f,t_l,t_r,t_l',t_r'$. We say that $T$ is \emph{large-spread} for the attacker $A_{p,k}$ iff $\spread(T) > 2k$.
\end{definition}

The existing verification algorithm for large-spread ensembles is based on a tree annotation procedure, which we also leverage in this paper. The annotation procedure associates each node of the individual trees with a symbolic representation of the set of instances that may traverse it in presence of adversarial manipulations. The procedure first annotates the root of the tree with the $d$-dimensional hyper-rectangle $(-\infty,+\infty]^d$, meaning that every instance will traverse the root. Children are then annotated by means of a recursive tree traversal: concretely, if the parent node $\sigma(f,v,t_1,t_2)$ is annotated with
$(l_i,r_i]^d$,
then the annotations of the roots of $t_1$ and $t_2$ are defined as
$(l_i^1,r_i^1]^d$
and
$(l_i^2,r_i^2]^d$
respectively:
\begin{equation*}
(l_i^1,r_i^1] = \begin{cases}
(l_i,r_i] \cap (-\infty,v] = (l_i,\min\{r_i,v\}] & \textnormal{if } i = f \\
(l_i,r_i] & \textnormal{otherwise},
\end{cases}
\end{equation*}
and:
\begin{equation*}
(l_i^2,r_i^2] = \begin{cases}
(l_i,r_i] \cap (v,+\infty) = (\max\{l_i,v\},r_i] & \textnormal{if } i = f \\
(l_i,r_i] & \textnormal{otherwise}.
\end{cases}
\end{equation*}

Given an annotated tree and an instance $\vec{x}$, it is possible to identify the minimal adversarial perturbation required to push $\vec{x}$ into any given leaf $\lambda(s)$. In particular, let $H = (l_1,r_1] \times \ldots \times (l_d,r_d]$ be the hyper-rectangle annotating $\lambda(s)$, then we define $\textit{dist}(\vec{x},\lambda(s)) = \vec{\delta} \in \R^d$, where:\footnote{We write $l_i - x_i + \varepsilon$ to stand for the minimum floating point number which is greater than $l_i - x_i$. We also assume here that $H$ is not empty, i.e., there does not exist any $(l_j,r_j]$ in $H'$ such that $l_j \geq r_j$.}
\begin{equation*}
\forall i \in [1,d]: \delta_i =
\begin{cases}
0& \textnormal{if } x_i \in H_i=(l_i, r_i] \\
l_i - x_i + \varepsilon & \textnormal{if } x_i \leq l_i \\
r_i - x_i & \textnormal{if } x_i > r_i.
\end{cases}
\end{equation*}

The value $||\vec{\delta}||_p$ is the norm of the minimal perturbation required to push $\vec{x}$ into the leaf $\lambda(s)$. Let then $L(t,\vec{x})$ be the set of the leaves of $t$ that are reachable by $\vec{x}$ as the result of adversarial manipulations, that is:
\[
L(t,\vec{x}) = \{\lambda(s) \in t ~|~ ||\dist(\vec{x}, \lambda(s)) ||_p \leq k\}.
\]

Note that we are only interested in the norms of the adversarial manipulations,
and not on the hyper-rectangles needed to compute them.
By exploiting this fact,
it is possible to compute the set $L(t,\vec{x})$ and $\{ ||\dist(\vec{x},\lambda(s))||_p ~|~ \lambda(s) \in L(t,\vec{x}) \}$
in $O(N)$, i.e., in linear time with respect to the
number of nodes of the tree ensemble~\cite{CalzavaraCPP23}. We assume this complexity for subsequent analyses.

\section{Robustness Verification of Large-Spread Boosted Ensembles}
\label{sec:verification}
Large-spread ensembles drive away from the negative NP-hardness result of robustness verification, however prior work just considers a simple ensemble method based on hard majority voting and does not apply to state-of-the-art boosting schemes, e.g., GBDTs~\cite{CalzavaraCPP23}. Here we investigate the robustness verification problem for large-spread boosted tree ensembles. 

\subsection{Optimization Problem}\label{sec:opt-problem}
The key insight of this work is that, given a large-spread boosted ensemble $T$ and an instance $\vec{x} \in \feats$ with true label $y$, we can identify the optimal evasion attack strategy for the attacker $A_{p,k}$ by solving an optimization problem. In particular, we can identify the least adversarial manipulation $\vec{\delta}$ such that $T(\vec{x} + \vec{\delta}) \neq y$, if any exists. In our formalization, we leverage the insight that any attack against a large-spread ensemble can be decomposed into a sum of orthogonal adversarial perturbations operating against the individual trees $t_i \in T$, hence the optimal evasion attack can be identified by finding the sub-ensemble $T' \subseteq T$ including the best trees to target and adding up their corresponding perturbations. 

Our focus on boosting complicates the formulation with respect to prior work on large-spread ensembles~\cite{CalzavaraCPP23}. In particular, the existing robustness verification algorithm~\cite{CalzavaraCPP23} assumes ensembles based on hard majority voting of the class predictions, hence the best sub-ensemble $T'$ to target just includes those trees requiring the least amount of adversarial perturbation to predict the wrong class. In the case of boosting, each tree outputs a real-valued raw prediction and different attacks lead to different changes to such predictions, making the optimal evasion attack strategy harder to identify. The challenge is that attacks cannot be totally ordered in general, because some attacks require a small perturbation (good for the attacker) but have just a limited impact on the raw prediction (bad for the attacker), while other attacks require a large perturbation (bad for the attacker) but have a large impact on the raw prediction (good for the attacker). For example, consider the regression tree in Figure~\ref{fig:tree} and the instance $\langle 9.1,0.1 \rangle$ with label $-1$. This instance would normally fall in the leaf $\lambda_1(0.2)$ upon prediction, however assume it might be corrupted by the attacker so as to reach any of the other three leaves.\footnote{To make the example more readable, we assume that adversarial perturbations are discrete with a tick $\varepsilon = 0.1$. Our formalization works for real-valued perturbations as required by the considered norms.} To significantly affect the raw prediction, the best choice of the attacker would be pushing the instance into the leaf $\lambda_3(0.8)$, which returns a rather different score; however this requires adding 5 to the second feature. Pushing the instance into the leaf $\lambda_2(0.3)$ has a much lower impact on the original raw prediction, however it just requires adding 1 to the first feature. Although this second attack is less impactful than the first one, it is cheaper and might still be enough to force a prediction error when the tree is used within a boosted ensemble.

The best way to combine different attacks is thus formalized as an optimization problem. For each tree $t_i \in T$ and leaf $\lambda(s_{ij}) \in L(t_i,\vec{x})$, we define the \emph{adversarial gain} of $\lambda(s_{ij})$ as the advantage that the attacker gets when the instance $\vec{x}$ is forced into $\lambda(s_{ij})$ rather than in the original leaf that is reached in $t_i$. 

\begin{definition}[Adversarial Gain]
Consider an instance $\vec{x}$ with true label $y$ and assume that $\vec{x}$ reaches the leaf $\lambda(s_o)$ when traversing the tree $t_i$ upon prediction. For any leaf $\lambda(s_{ij}) \in L(t_i,\vec{x})$, we define the \emph{adversarial gain} as follows:
\[
G(t_i,\vec{x},y,\lambda(s_{ij})) =
\begin{cases}
s_o - s_{ij} & \text{if } y = +1 \\
s_{ij} - s_o & \text{if } y = -1 \\
\end{cases}.
\]
\end{definition}

Intuitively, a leaf has a positive adversarial gain whenever it moves the instance closer to the wrong class than the original prediction. For example, consider again the decision tree in Figure~\ref{fig:tree} and the instance $\langle 9.1,0.1 \rangle$ with label -1. This instance originally reaches the leaf $\lambda_1(0.2)$ upon prediction; if the instance instead reaches the leaf $\lambda_4(0.6)$ as the result of adversarial manipulations, the adversarial gain is $0.6 - 0.2 = 0.4$, because an attack moving from $\lambda_1(0.2)$ to $\lambda_4(0.6)$ grants an advantage of 0.4 to the positive class over the negative class in terms of raw predictions. Notice that the definition of adversarial gain implicitly assumes that the inverse link function $\iota$ is monotonically increasing, because the true label determines whether the original raw prediction should be increased or decreased to lead to a prediction error.

We are finally ready to formulate our optimization problem. For each tree $t_i \in T$ and leaf $\lambda(s_{ij}) \in L(t_i,\vec{x})$, we introduce a new variable $z_{ij} \in \{0,1\}$. Then, determining the optimal evasion attack strategy for the instance $\vec{x}$ with true label $y$ against the ensemble $T$ can be done as follows.

\begin{problem}[Optimal Attack Strategy for Large-Spread Boosted Ensembles]\label{prob:optimization}
Determine the value assignment of the variables $\{z_{ij}\}_{ij}$ from the set $\{0,1\}$ which solves the following optimization problem:
\begin{align}
\textnormal{maximize} & \quad \sum_{i,j} z_{ij} \cdot G(t_i,\vec{x},y,\lambda(s_{ij})) \\
\textup{subject to} & \quad ||\sum_{i,j} z_{ij} \cdot \dist(\vec{x}, \lambda(s_{ij})) ||_p \leq k, \\
                    & \quad \forall i: \sum_j z_{ij} \leq 1.
\end{align}

In the following, we let $\Gamma$ stand for the optimal value of the objective function taken by the identified solution.
\end{problem}

In words, the attacker wants to maximize the total adversarial gain (1), which is the best strategy to force the wrong prediction, under two constraints: (2) the $L_p$-norm of the adversarial perturbation required to perform the attack is bounded above by the maximum perturbation $k$, and (3) just a single attack per tree is chosen, because a single leaf of each tree is reached upon prediction.

Very importantly, note that constraint (2) leverages the observation that the optimal evasion attack against a large-spread ensemble can be decomposed into a sum of adversarial perturbations operating against the individual trees~\cite{CalzavaraCPP23}. Moreover, observe that the optimal attack strategy may not be successful in flipping the ensemble prediction. Once a solution to the optimization problem is found, it is possible to verify robustness in two ways, as illustrated in the following.

\subsection{Basic Verification Algorithm}
The first version of the verification algorithm explicitly constructs the smallest perturbation $\vec{\delta}_{opt}$ that might lead to an attack and then checks whether it is successful or not. In particular, given a large-spread boosted ensemble $T$ and an instance $\vec{x}$ with true label $y$, the basic algorithm (``$BV$'', henceforth)
verifies robustness against $A_{p,k}$ as follows:
\begin{enumerate}
    \item Let $T(\vec{x}) = y'$. If $y' \neq y$, return False. Otherwise, solve Problem~\ref{prob:optimization} for the given input and initialize the adversarial perturbation $\vec{\delta}_{opt}$ to the vector $\langle 0, \ldots, 0 \rangle$.
    \item For each $z_{ij} = 1$, add the vector $\dist(\vec{x}, \lambda(s_{ij}))$ to the adversarial perturbation $\vec{\delta}_{opt}$.
    \item If $T(\vec{x} + \vec{\delta}_{opt}) = y$ return True, otherwise return False.
\end{enumerate}

The following theorem formalizes the correctness of the basic verification algorithm.

\begin{theorem}
The basic verification algorithm $BV(T,\vec{x},y,p,k)$ returns True if and only if $T$ is robust on the instance $\vec{x}$ with true label $y$ against the attacker $A_{p,k}$.
\end{theorem}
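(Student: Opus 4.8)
The plan is to reduce the biconditional to a comparison between the single raw prediction produced by the constructed perturbation $\vec{\delta}_{opt}$ and the raw predictions produced by \emph{all} admissible perturbations. First I would dispose of the trivial case: if $T(\vec{x}) \neq y$, then $BV$ returns False in step~1, and since $\vec{x} \in A_{p,k}(\vec{x})$ the ensemble is indeed not robust, so both sides agree. Hence I assume $T(\vec{x}) = y$, and by symmetry I would carry out the argument for $y = +1$, noting that $y = -1$ is identical with the signs of the gains reversed.

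The crux is an orthogonality lemma enabled by the hypothesis $\spread(T) > 2k$. I would show that whenever $\lambda \in L(t_i,\vec{x})$ and $\lambda' \in L(t_{i'},\vec{x})$ with $i \neq i'$, the minimal perturbations $\dist(\vec{x},\lambda)$ and $\dist(\vec{x},\lambda')$ have disjoint supports: if a feature $f$ lay in both supports, each leaf would force $\vec{x}$ across a threshold of its tree on feature $f$ within budget, placing both thresholds within distance $k$ of $x_f$ and hence within $L_p$-distance $2k$ of one another, contradicting $\spread(T) > 2k$. Two consequences follow. For the optimizer's selection $\{z_{ij}\}$, summing the orthogonal $\dist$ vectors yields $\vec{\delta}_{opt}$ such that $\vec{x}+\vec{\delta}_{opt}$ reaches exactly $\lambda(s_{ij})$ in each selected tree and the original leaf elsewhere, so that $\pred{T}(\vec{x}+\vec{\delta}_{opt}) = \pred{T}(\vec{x}) - \Gamma$; and constraint~(2) guarantees $\|\vec{\delta}_{opt}\|_p \leq k$, so $\vec{x}+\vec{\delta}_{opt} \in A_{p,k}(\vec{x})$.

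Next I would establish the reverse correspondence: every admissible $\vec{z} = \vec{x}+\vec{\delta} \in A_{p,k}(\vec{x})$ induces a feasible solution of Problem~\ref{prob:optimization} whose objective equals the raw-prediction shift $\pred{T}(\vec{x}) - \pred{T}(\vec{z})$. Letting $\lambda_i$ be the leaf reached by $\vec{z}$ in each $t_i$, the perturbation $\vec{\delta}$ itself realizes $\lambda_i$ within budget, so $\lambda_i \in L(t_i,\vec{x})$ and setting the matching $z_{ij}=1$ respects constraint~(3). By the orthogonality lemma the vectors $\dist(\vec{x},\lambda_i)$ have disjoint supports, and on each such feature $|\delta_f|$ is at least the minimal displacement $|\dist_f(\vec{x},\lambda_i)|$; summing over the disjoint supports yields $\|\sum_i \dist(\vec{x},\lambda_i)\|_p \leq \|\vec{\delta}\|_p \leq k$, so constraint~(2) holds. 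The objective of this solution is the total adversarial gain of $\vec{z}$, which for $y=+1$ equals $\sum_i (s_{o,i} - t_i(\vec{z})) = \pred{T}(\vec{x}) - \pred{T}(\vec{z})$.

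Finally I would assemble the two directions. For \emph{if}: robustness of $T$ together with $\vec{x}+\vec{\delta}_{opt} \in A_{p,k}(\vec{x})$ forces $T(\vec{x}+\vec{\delta}_{opt}) = y$, so $BV$ returns True. For \emph{only if}: assume $BV$ returns True, i.e.\ $T(\vec{x}+\vec{\delta}_{opt}) = y$, equivalently $\iota(\pred{T}(\vec{x})-\Gamma) \geq \tau$. For any $\vec{z} \in A_{p,k}(\vec{x})$, the correspondence and the optimality of $\Gamma$ give $\pred{T}(\vec{x}) - \pred{T}(\vec{z}) \leq \Gamma$, hence $\pred{T}(\vec{z}) \geq \pred{T}(\vec{x}) - \Gamma$; monotonicity of $\iota$ then yields $\iota(\pred{T}(\vec{z})) \geq \tau$, so $T(\vec{z}) = y$, and since $\vec{z}$ is arbitrary $T$ is robust. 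I expect the main obstacle to be the correspondence step, specifically checking that the selection induced by an arbitrary $\vec{z}$ satisfies the budget constraint~(2): this is exactly where the disjoint-support consequence of the large-spread property is indispensable, since without orthogonality the minimal per-tree perturbations could reinforce one another and inflate the combined norm beyond $k$.
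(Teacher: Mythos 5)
Your overall architecture matches the paper's proof: both directions hinge on (i) decomposing any admissible attack into per-tree orthogonal perturbations, (ii) the identity between total adversarial gain and the raw-prediction shift, (iii) admissibility of $\vec{\delta}_{opt}$ via constraint (2), and (iv) optimality of $\Gamma$. The difference is that the paper simply imports property (i) from prior work~\cite{CalzavaraCPP23}, whereas you try to prove it from the large-spread definition; your reverse correspondence (mapping an admissible $\vec{z}$ to a feasible solution with objective $\pred{T}(\vec{x})-\pred{T}(\vec{z})$) is carried out correctly and indeed only needs disjoint supports.

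However, there is a genuine gap in your ``consequence (a)''. Disjoint supports of the minimal per-tree perturbations does \emph{not} by itself imply that $\vec{x}+\vec{\delta}_{opt}$ reaches the targeted leaf in each selected tree and the original leaf elsewhere. The danger is interference of a different kind: a feature $f$ perturbed in order to attack tree $t_{i'}$ may also be \emph{tested} (without needing any perturbation, hence outside the support of $\dist(\vec{x},\lambda(s_{ij}))$) along $\vec{x}$'s path in another tree $t_i$; perturbing $f$ could then push the instance across one of $t_i$'s thresholds and change which leaf of $t_i$ is reached. In that case $\pred{T}(\vec{x}+\vec{\delta}_{opt})$ would not equal $\pred{T}(\vec{x})-\Gamma$, and your only-if direction --- where you translate $T(\vec{x}+\vec{\delta}_{opt})=y$ into $\iota(\pred{T}(\vec{x})-\Gamma)\geq\tau$ --- would collapse. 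What you need is the stronger property the paper cites: the per-tree perturbations ``do not affect the prediction paths of different trees when they are added together.'' Fortunately the fix is one more application of your own spread argument: the attack on $t_{i'}$ moves $x_f$ onto (just past) a threshold $v'$ of $t_{i'}$, at distance at most $k$ from $x_f$, while every threshold of $t_i$ on feature $f$ is more than $2k$ away from $v'$ by the large-spread condition; hence the landing point cannot have crossed either endpoint of the interval containing $x_f$ in the hyper-rectangle of $t_i$'s current leaf. (Note also that your ``within distance $k$'' step silently assumes $p \geq 1$ or $p=\infty$; for $p=0$ the spread condition instead forces distinct trees to use disjoint feature sets, which makes both disjoint supports and non-interference trivial.)
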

\begin{proof}
We prove the two directions separately:
\begin{itemize}
    \item[$(\Rightarrow)$] Assume that $BV(T,\vec{x},y,p,k)$ returns True. In this case we know that $T(\vec{x}) = y$, hence we just need to prove that $\forall z \in A_{p,k}(\vec{x}): T(\vec{z}) = y$ to conclude. Since $BV(T,\vec{x},y,p,k)$ returns True, we know that $T(\vec{x} + \vec{\delta}_{opt}) = y$ for the adversarial perturbation $\vec{\delta}_{opt}$ constructed by the algorithm. Assume for simplicity that $y = +1$, a similar argument applies to the case $y = -1$. Since $T(\vec{x}) = +1$, we know that $\iota(\pred{T}(\vec{x})) \geq \tau$. We now formalize the following intuition: since $\iota$ is monotonically increasing, the best strategy to force a prediction error is lowering $\pred{T}(\vec{x})$ as much as possible to push it below $\tau$, so that the predicted class becomes -1. Since $T$ is large-spread, any evasion attack against $T$ can be decomposed into a sum of pairwise orthogonal perturbations $\vec{\delta}_1,\ldots,\vec{\delta}_m$ that do not affect the prediction paths of different trees when they are added together~\cite{CalzavaraCPP23}. Let $\vec{\delta} = \sum_{i = 1}^m \vec{\delta}_i$ and, for each $\vec{\delta}_i$, let $\lambda(s_{ij})$ be the leaf that it is reached by $t_i(\vec{x} + \vec{\delta}_i)$, which is the same leaf reached by $t_i(\vec{x} + \vec{\delta})$. This implies that the sum of the adversarial gains $G(t_i,\vec{x},y,\lambda(s_{ij}))$ is equal to $\pred{T}(\vec{x}) - \pred{T}(\vec{x} + \vec{\delta})$, i.e., such sum identifies how much $\pred{T}(\vec{x})$ can be lowered by $\vec{\delta}$. Note that we are implicitly using constraint (3) here, since we consider leaves belonging to different trees. Now observe that $T(\vec{x} + \vec{\delta}_{opt}) = +1$ implies that $\iota(\pred{T}(\vec{x} + \vec{\delta}_{opt})) \geq \tau$. Since $\vec{\delta}_{opt}$ is the perturbation maximizing the sum of the adversarial gains, i.e., $\vec{\delta}_{opt}$ lowers $\pred{T}(\vec{x})$ as much as possible, we have that $\forall z \in A_{p,k}(\vec{x}): \pred{T}(\vec{z}) \geq \tau$, hence $\forall z \in A_{p,k}(\vec{x}): T(\vec{z}) = +1$.

    \item[$(\Leftarrow)$] Assume that $T$ is robust on $\vec{x}$. In this case we know that $T(\vec{x}) = y$, hence we just need to prove that $T(\vec{x} + \vec{\delta}_{opt}) = y$  for the adversarial perturbation $\vec{\delta}_{opt}$ constructed by the algorithm to conclude. Since $T$ is robust on $\vec{x}$, we know that $\forall z \in A_{p,k}(\vec{x}): T(\vec{z}) = y$. Hence, we just need to show that $\vec{x} + \vec{\delta}_{opt} \in A_{p,k}(\vec{x})$, which is equivalent to proving that $||\vec{\delta}_{opt}||_p \leq k$. This follows from constraint (2) by definition of $\vec{\delta}_{opt}$.
\end{itemize}
\end{proof}

\paragraph{Complexity.}
If $R$ is the time for solving Problem~\ref{prob:optimization}, the complexity of this algorithm is $R+O(dN)$,
because $O(dN)$ is the time to perform the additions in step (2) and step (3) is done in $O(N)$.


\subsection{Efficient Verification Algorithm}\label{sec:EV}
A more efficient robustness verification algorithm does not construct the optimal evasion attack $\vec{\delta}_{opt}$, but directly leverages the semantics of the adversarial gain. In particular, given a large-spread boosted ensemble $T$ and an instance $\vec{x}$ with true label $y$, the efficient algorithm (``$EV$'', henceforth) verifies robustness against $A_{p,k}$ as follows:
\begin{enumerate}
    \item Let $T(\vec{x}) = y'$. If $y' \neq y$, return False. Otherwise, solve Problem~\ref{prob:optimization} for the given input to determine the maximum value $\Gamma$ taken by function (1). 
    \item Let $\pred{T}(\vec{x}) = s$. If $y = +1$, return True if $\iota(s - \Gamma) \geq \tau$ and False otherwise. If $y = -1$, return True if $\iota(s + \Gamma) < \tau$ and False otherwise.
\end{enumerate}

\begin{figure*}[t]
    \centering
    \includegraphics[scale=0.5]{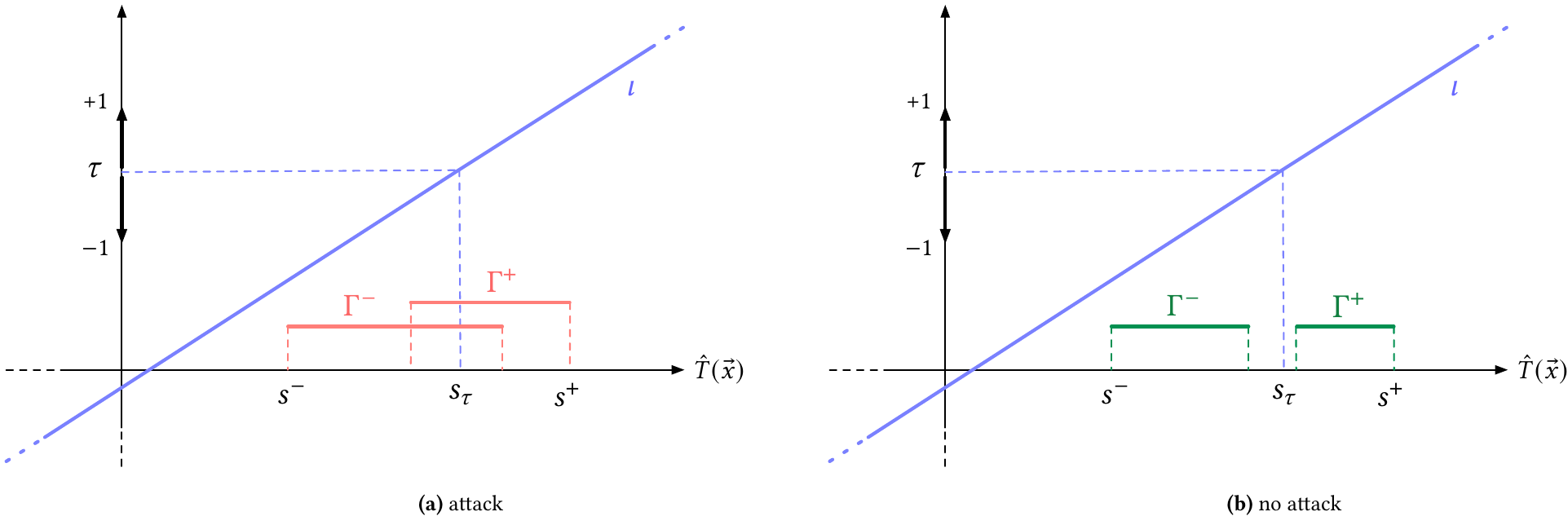}
    \caption{
        Correctness of robustness verification for the efficient algorithm $EV$.
    }
    \label{fig:gain}
\end{figure*}

\begin{theorem}
The efficient verification algorithm $EV(T,\vec{x},y,p,k)$ returns True if and only if $T$ is robust on the instance $\vec{x}$ with true label $y$ against the attacker $A_{p,k}$.
\end{theorem}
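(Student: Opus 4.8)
The plan is to avoid re-proving robustness from scratch and instead \emph{reduce correctness of $EV$ to the already-established correctness of the basic algorithm $BV$}. Observe that the two algorithms are identical in step (1): each computes $T(\vec{x})$, returns False when $T(\vec{x}) \neq y$, and otherwise solves Problem~\ref{prob:optimization} on the same input, obtaining the same optimum $\Gamma$. Hence it suffices to show that, whenever $T(\vec{x}) = y$, the test performed by $EV$ in step (2) returns True exactly when the test implicit in step (3) of $BV$ — namely $T(\vec{x} + \vec{\delta}_{opt}) = y$ — does. Establishing this equivalence shows the two algorithms produce the same output on every input, so the theorem for $EV$ follows at once from the theorem already proved for $BV$.

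The central step is to identify $\Gamma$ with the signed change that $\vec{\delta}_{opt}$ induces on the raw prediction. Consider first $y = +1$, and let $\{z_{ij}\}$ be an optimal assignment, from which $BV$ builds $\vec{\delta}_{opt}$ as the sum of the vectors $\dist(\vec{x}, \lambda(s_{ij}))$ over the selected leaves. I would reuse verbatim the argument from the proof of the basic algorithm: because $T$ is large-spread, these per-tree perturbations are pairwise orthogonal and do not disturb one another's prediction paths, so each selected $\lambda(s_{ij})$ is exactly the leaf reached by $t_i(\vec{x} + \vec{\delta}_{opt})$. Thus the adversarial gains compose additively and their total $\Gamma$ equals $\pred{T}(\vec{x}) - \pred{T}(\vec{x} + \vec{\delta}_{opt}) = s - \pred{T}(\vec{x} + \vec{\delta}_{opt})$, i.e. $\pred{T}(\vec{x} + \vec{\delta}_{opt}) = s - \Gamma$. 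Since $\iota$ is monotonically increasing, the test $\iota(s - \Gamma) \geq \tau$ checked by $EV$ is literally the statement $\iota(\pred{T}(\vec{x} + \vec{\delta}_{opt})) \geq \tau$, that is $T(\vec{x} + \vec{\delta}_{opt}) = +1$. The case $y = -1$ is symmetric: the gain is $s_{ij} - s_o$, so $\Gamma = \pred{T}(\vec{x} + \vec{\delta}_{opt}) - s$, giving $\pred{T}(\vec{x} + \vec{\delta}_{opt}) = s + \Gamma$, and $EV$'s test $\iota(s + \Gamma) < \tau$ coincides with $T(\vec{x} + \vec{\delta}_{opt}) = -1$. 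Note the identity $\pred{T}(\vec{x}+\vec{\delta}_{opt}) = s \mp \Gamma$ depends only on the gain sum being $\Gamma$, so it is insensitive to how ties among optimal solutions are broken.

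Putting the two labels together, $EV$ and $BV$ agree on their decisive test, hence on every input, and the theorem follows. As a cross-check I would also verify the claim directly in the spirit of the $BV$ proof and of Figure~\ref{fig:gain}: since $\Gamma$ is the \emph{maximum} total gain, $s - \Gamma$ (resp. $s + \Gamma$) is the \emph{smallest} (resp. largest) raw prediction attainable within the budget of $A_{p,k}$, so robustness means precisely that this worst-case raw prediction still lands on the correct side of $\tau$, which is exactly what $EV$ tests.

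The main obstacle I anticipate is justifying that $\Gamma$ — defined purely as the optimum of a $0/1$ optimization over leaf selections — genuinely equals the raw-prediction change of the \emph{concrete} perturbation $\vec{\delta}_{opt}$. This rests on two facts that must be invoked with care: first, the large-spread property guaranteeing that summing the orthogonal per-tree perturbations does not alter which leaves are reached, so gains compose additively with no cross-tree interference; and second, the optimality of $\Gamma$ ensuring no admissible attack drives $\pred{T}$ further past $\tau$ than $\vec{\delta}_{opt}$ does. Both are already secured by constraints (2)--(3) of Problem~\ref{prob:optimization} and by the argument in the proof of the basic algorithm, so the remaining work is largely bookkeeping to transport them into the $EV$ setting and to confirm that the strict-versus-nonstrict inequalities ($\geq \tau$ for $y = +1$ against $< \tau$ for $y = -1$) match the definition of the class prediction $T(\cdot)$.
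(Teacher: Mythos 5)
Your proof is correct, but it follows a genuinely different route from the paper's. The paper proves the theorem \emph{directly}, essentially in the spirit of your final ``cross-check'': it reads the claim off Figure~\ref{fig:gain}, arguing that since $\Gamma$ is the maximum total adversarial gain, $s-\Gamma$ (for $y=+1$), resp.\ $s+\Gamma$ (for $y=-1$), is the extremal raw prediction the attacker can force, and the model is robust exactly when this worst-case raw prediction is mapped by the monotone $\iota$ to the correct side of $\tau$; the basic algorithm $BV$ is never mentioned. You instead reduce correctness of $EV$ to the already-proved correctness of $BV$, via the key identity $\pred{T}(\vec{x}+\vec{\delta}_{opt}) = s \mp \Gamma$, which shows the two algorithms' decisive tests coincide on every input. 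Your reduction buys modularity and rigor: the genuinely security-theoretic content (decomposition of attacks into orthogonal per-tree perturbations, plus optimality of $\Gamma$ over all admissible attacks) is invoked exactly once, inside the $BV$ theorem, and your only new obligation is a bookkeeping identity that you correctly note is insensitive to how ties among optimal solutions of Problem~\ref{prob:optimization} are broken. The paper's direct argument buys independence from $BV$ --- fitting, since $EV$ never materializes $\vec{\delta}_{opt}$ --- and geometric intuition about the inverse link function, but as written it is a sketch that implicitly reuses the same two facts your reduction makes explicit: that the raw prediction $s\mp\Gamma$ is actually attainable within the budget $k$, and that no admissible attack pushes $\pred{T}$ further past $\tau$. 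Both arguments rest on the same pillars; yours routes them through the $BV$ theorem, the paper's through the picture.
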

\begin{proof}
The key insight of the proof is shown in Figure~\ref{fig:gain}. The picture shows how the inverse link function $\iota$ (in blue) maps a raw prediction $\hat{T}(\vec{x})$ on the $x$-axis to a class label on the $y$-axis via the threshold $\tau$. The $\iota$ function shown here is just an example; it is not necessarily a linear function, but it can be an arbitrary monotonically increasing function. Let $s_{\tau}=\iota^{-1}(\tau)$ be the decision boundary. The model is \textit{not} robust on the instance $\vec{x}$ when scenario (a) happens. In this case, an attack is possible: for an instance with label $y=+1$, raw score prediction $\hat{T}(\vec{x})=s^{+}$ and maximum adversarial gain $\Gamma^{+}$, we have $s^{+}-\Gamma^{+} < s_{\tau}$, thus the instance is assigned the wrong class by $\iota$ as $\iota(s^{+}-\Gamma^{+}) < \tau$. Symmetrically, for an instance with label $y=-1$, raw score prediction $\hat{T}(\vec{x})=s^{-}$, and maximum adversarial gain $\Gamma^{-}$, we have $s^{-}+\Gamma^{-} \geq s_{\tau}$. Scenario (b) shows instead the case where no attack is possible and the model is robust, because the maximum adversarial gain is not sufficient to change the assigned label.
\end{proof}

\paragraph{Complexity.} 
If $R$ is the time for solving Problem~\ref{prob:optimization}, the complexity of this algorithm is $R+O(N)$ because computing $\pred{T}(\vec{x})$ takes $O(N)$ time. Note that this complexity is lower than the complexity of the previous basic verification algorithm.

\section{Solving the Optimization Problem}
\label{sec:solving}
The robustness verification algorithms presented in Section~\ref{sec:verification} build on a solution to Problem~\ref{prob:optimization}, so we now discuss how this problem can be solved. In the following, we focus on the efficient verification algorithm from Section~\ref{sec:EV}. Thus, we discuss how the maximum value $\Gamma$ of the objective function (1) can be computed for different values of $p$,
and omit how the actual value assignment of the
variables $z_{ij}$ is determined.

\subsection{Solution for $L_\infty$-Attackers}
We start from the case $L_\infty$, which is the easiest and most efficient to solve. Recall that $||\vec{x}||_{\infty} = \max \{|x_i| ~|~ 1 \leq i \leq d\}$.
We observe that for any set of pairwise orthogonal vectors $\{\vec{\delta}_i\}_i$ we have $|| \sum_i \vec{\delta}_i ||_\infty = \max \{||\vec{\delta}_i||_\infty\}$.
This means that constraint (2) is satisfied as long as we just consider leaves $\lambda(s_{ij})$ such that $|| \dist(\vec{x}, \lambda(s_{ij})) ||_\infty \leq k$, i.e., the leaves in the set $L(t_i,\vec{x})$. 
Since these are the only leaves considered in our optimization problem, constraint (2) is trivially satisfied by definition. Constraint (3) is also straightforward to enforce, because it just states that we can select at most one leaf per tree, hence the best attack strategy amounts to picking the leaf with the highest adversarial gain in each tree. In other words, to maximize function (1) we can then use the following algorithm:
\begin{enumerate}
    \item Initialize $\Gamma = 0$. For each tree $t_i \in T$, compute $L(t_i,\vec{x})$.
    \item For each $t_i \in T$, find the leaf $\lambda(s_{ij}) \in L(t_i,\vec{x})$ with the largest adversarial gain $G(t_i,\vec{x},y,\lambda(s_{ij}))$; in case of ties, arbitrarily break them to select one leaf. Then, sum $G(t_i,\vec{x},y,\lambda(s_{ij}))$ to $\Gamma$.
\end{enumerate}

\paragraph{Complexity.}
We recall that computing the sets $L(t_i,\vec{x})$ for all trees $t_i \in T$ takes $O(N)$ time. Since the leaf with the largest adversarial gain in each $t_i$ can be identified while computing $L(t_i,\vec{x})$, the total complexity of the algorithm is $R = O(N)$. It follows that also the $EV$ algorithm runs in $O(N)$ time, i.e., robustness verification can be performed in linear time with respect to the number of nodes of the ensemble.

\subsection{Solution for $L_0$-Attackers}
The case $L_0$ is more complicated. Again, recall that $||\vec{x}||_0 = |\{i \, | \, x_i \neq 0\}|$. We start from the observation that, for any set of pairwise orthogonal vectors $\{\vec{\delta}_i\}_i$, we have $|| \sum_i \vec{\delta}_i ||_0 = \sum_i || \vec{\delta}_i ||_0$. Hence, constraint (2) can be rewritten as:
\[
\sum_{i,j} z_{ij} \cdot ||\dist(\vec{x}, \lambda(s_{ij}))||_0 \leq k.
\]

With this insight, we observe that our optimization problem is reminiscent of a variant of the classic 0-1 \textit{knapsack problem}~\cite{pisinger1998knapsack}, where the adversarial gain represents the value of the items and the $L_0$-norm of the adversarial perturbations represents their weight.
We first review the knapsack problem and how it can be efficiently solved with dynamic programming.

Given a set of items $S$, each with a value $v$ and a weight $w$, and a maximum capacity $W$, the goal of 0-1 knapsack is to choose the best items to pick to maximize the overall value without exceeding the capacity $W$.
Formally, for each item $h \in S$ we introduce a new variable $z_h \in \{0,1\}$, and we formulate 0-1 knapsack as the optimization problem:
\begin{align*}
\textnormal{maximize} & \quad \sum_h z_h \cdot v_h \\
\textnormal{subject to} & \quad \sum_h z_h \cdot w_h \leq W.
\end{align*}

Assuming that all weights are positive integers, we can build a matrix $M$ of size $(|S|+1) \times (W+1)$, where the entry $M[h,w]$ stores the maximum value that can be obtained with capacity less than or equal to $w$ using up to $h$ items.
The matrix $M$ can be defined as follows:
\begin{itemize}
\item $M[0,w] = 0$ for all $0 \leq w \leq W$, i.e., the maximum value is 0 when the current maximum weight $w$ is 0 and no item can be taken yet.
\item $M[h,w] = M[h-1,w]$ when $w \geq 1$ and $w_h > w$, i.e., if the weight of item $h$ exceeds the current maximum weight $w$, then $h$ cannot be taken.
\item $M[h,w] = \max \{M[h-1,w], M[h-1,w-w_h] + v_h\}$ when $w \geq 1$ and $w_h \leq w$, i.e., if the weight of item $h$ does not exceed the current maximum weight $w$, then $h$ can either be taken or not, depending on its value.
\end{itemize}

The solution to the problem is then found in the lower right corner of the matrix $M$. This simple dynamic programming algorithm has complexity $O(|S| \cdot W)$.

We now discuss how the same idea can be generalized to efficiently solve Problem~\ref{prob:optimization}. For each tree $t_i \in T$ and leaf $\lambda(s_{ij}) \in L(t_i,\vec{x})$ such that $G(t_i,\vec{x},\lambda(s_{ij})) > 0$, we compute $||\dist(\vec{x}, \lambda(s_{ij}))||_0$. Note that the value of such computation is always a positive integer by definition of $||\cdot||_0$. The key difference of Problem~\ref{prob:optimization} with respect to 0-1 knapsack is that we have to enforce constraint (3), which ensures that just a single attack per tree is chosen.
To deal with this, we create a matrix $M$ of size $(m+1) \times (k+1)$ defined as follows:
\begin{itemize}
\item $M[0,q] = 0$ for all $0 \leq q \leq k$.
\item $M[i,q] = M[i-1,q]$ when $i \geq 1$ and $\forall \, \lambda(s_{ij}) \in L(t_i,\vec{x}): ||\dist(\vec{x}, \lambda(s_{ij}))||_0 > q$.
\item $M[i,q] = \max \{M[i-1,q], \max Q\}$ when $i \geq 1$ and $\exists \, \lambda(s_{ij}) \in L(t_i,\vec{x}): ||\dist(\vec{x}, \lambda(s_{ij}))||_0 \leq q$, where
\begin{align*}
Q = & \{M[i-1,q - ||\dist(\vec{x}, \lambda(s_{ij}))||_0] + G(t_i,\vec{x},y,\lambda(s_{ij})) ~|~ \\
& \quad \quad \lambda(s_{ij}) \in L(t_i,\vec{x}) \wedge ||\dist(\vec{x}, \lambda(s_{ij}))||_0 \leq q\}.
\end{align*}
\end{itemize}

The idea of the solving algorithm is equivalent to the dynamic programming approach for 0-1 knapsack, where the second clause encodes the case where the tree $t_i$ cannot be attacked within the current maximum perturbation $q$ and the third clause instead encodes the case where $t_i$ can be attacked, with the algorithm determining whether this should be done or not. Note that the third clause admits the possibility of having a set of possible attacks against $t_i$, in which case we select the one maximizing the adversarial gain against the ensemble $t_1,\ldots,t_i$. Soundness comes from the observation that, in a large-spread ensemble, each feature can be successfully attacked in at most one tree. Hence, an optimal attack against the ensemble $t_1,\ldots,t_{i-1}$ cannot be invalidated when considering the next tree $t_i$, because the features manipulated by an attack against $t_i$ cannot affect the prediction paths of $t_1,\ldots,t_{i-1}$, i.e., the verification problem has an optimal sub-structure. The solution to the problem $\Gamma$ is again found in the lower right corner of the matrix $M$, i.e., $M[m][k]$.

\paragraph{Complexity.}
We compute the sets $L(t_i,\vec{x})$ for all trees $t_i \in T$ in $O(N)$ time. The computation of $\max Q$ takes $O(2^D)$ time, where $D$ is the maximal tree depth in the ensemble. The total complexity of the algorithm is therefore $R = O(m \cdot k \cdot 2^D)$ since $N=O(m \cdot 2^D)$. It follows that also the $EV$ algorithm runs in this time. Technically speaking, the complexity of the algorithm is \emph{pseudo-polynomial}, because there is no formal guarantee that $k$ is bounded by a polynomial function of $m \cdot 2^D$. Nevertheless, $k$ is very small compared to $m \cdot 2^D$ in practical cases, because the number of features that a meaningful $L_0$-norm attacker can perturb is much smaller than the size of the ensemble. Indeed, observe that an $L_0$-norm attacker who can arbitrarily corrupt $k$ features can trivially break the robustness of any classifier when $k$ grows large.

\subsection{Solution for $L_p$-Attackers}
Finally, we deal with the case $L_p$ with $p \in \N$. Again, we start from a simple observation: for any set of pairwise orthogonal vectors $\{\delta_i\}_i$ and any $p \in \N$, we have $|| \sum_i \vec{\delta}_i ||_p = ( \sum_i || \vec{\delta}_i ||_p^p )^{1/p}$. Hence, constraint (2) can be rewritten as:
\[
\sum_{i,j} z_{ij} \cdot ||\dist(\vec{x}, \lambda(s_{ij}))||_p^p \leq k^p.
\]

We can then leverage the same idea of the case $L_0$, but some additional care is needed. In particular, observe that $||\dist(\vec{x}, \lambda(s_{ij}))||_p^p$ is not necessarily a positive integer, so the previous formulation cannot be readily applied. A possible solution to work again with positive integers is to multiply each $||\dist(\vec{x}, \lambda(s_{ij}))||_p^p$ by a suitable power of 10. In particular, let $\eta$ stand for a normalization function which multiplies its argument by $10^{\ell}$, where $\ell \geq 0$ is a constant large enough to ensure that all the arguments of $\eta$ are transformed into integers, e.g., if $\ell = 3$ and $x = 0.123$, then $\eta(x) = 123$. We can therefore reuse the same formulation we described for the case $L_0$, where we replace $||\dist(\vec{x}, \lambda(s_{ij}))||_0$ with $\eta(||\dist(\vec{x}, \lambda(s_{ij}))||_p^p)$ and $k$ with $\eta(k^p)$.

\paragraph{Complexity.}
Similarly to the case $L_0$, it follows that the time for solving Problem~\ref{prob:optimization} is $R = O(m \cdot \eta(k^p) \cdot 2^D)$, which is again pseudo-polynomial, but observe that the term $\eta(k^p)$ may grow very fast, because the normalization factor $\ell$ appears as an exponent of 10. Still, when $\ell$ is small, one can directly apply the proposed approach to establish exact robustness bounds. In practice, $\ell$ can be made small by performing a discretization of the feature space, which is a common pre-processing practice in real-world scenarios. Even in absence of discretization, one may mitigate the complexity by enforcing a properly reduced value of $\ell$ in the verification phase. By taking $\lfloor \eta( ||\dist(\vec{x}, \lambda(s_{ij}))||_p^p ) \rfloor$ and $\lceil \eta( k^p ) \rceil$ in constraint (2), we get a conservative approximation for robustness verification, because distances are under-approximated and the maximum adversarial perturbation is over-approximated with an arbitrarily chosen precision. This way, one may occasionally get false alarms where robust instances are flagged as not robust, but no evasion attack is missed.

\subsection{NP-Hardness Result}
We proposed verification algorithms for the case $p \in \extendedN$ based on 0-1 knapsack, hence running in pseudo-polynomial time. Here we show that such limitation is fundamental, because there exists no polynomial time algorithm for the case $p \in \extendedN$ despite our large-spread requirement.

\begin{theorem}
The robustness verification problem for large-spread boosted ensembles is NP-hard when considering attackers based on the $L_p$-norm, for any $p \in \extendedN$.
\end{theorem}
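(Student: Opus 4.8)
The plan is to prove NP-hardness by a polynomial-time reduction from the decision version of the 0-1 knapsack problem (whose core is SUBSET-SUM), since this is exactly the combinatorial structure underlying Problem~\ref{prob:optimization}. Recall from the efficient algorithm of Section~\ref{sec:EV} that, once $\Gamma$ is known, $T$ is \emph{not} robust on $\vec{x}$ precisely when the maximum achievable adversarial gain is large enough to push $\pred{T}(\vec{x})$ across the threshold $\tau$. Deciding robustness is therefore equivalent to deciding whether there is a choice of at most one leaf per tree whose perturbations satisfy constraint~(2) and whose adversarial gains sum to at least a prescribed value — a knapsack-type feasibility question — so a hardness reduction in the reverse direction is the natural route.

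Concretely, given a knapsack instance with items $h = 1,\dots,n$ of value $v_h$ and positive integer weight $w_h$, capacity $W$ and value target $V$, I would build an ensemble $T = \{t_1,\dots,t_n\}$ of decision stumps, one per item, each splitting on its own \emph{fresh} feature $f_h$. The crucial effect of using pairwise distinct features is that no two trees ever split on a common feature, so the set of threshold pairs quantified in Definition~\ref{def:spread} is empty and $\spread(T) = +\infty > 2k$: the constructed ensemble is large-spread for \emph{any} budget, exactly as the theorem demands. For each stump I place $\vec{x}$ in the leaf of zero gain and calibrate the threshold of $f_h$ so that moving $\vec{x}$ to the opposite leaf has cost $||\dist(\vec{x},\cdot)||_p^p = w_h$ (for $p \in \N$ this merely fixes the gap between $x_{f_h}$ and the threshold to $w_h^{1/p}$), while the two leaf scores are chosen so that the corresponding adversarial gain equals $v_h$. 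Finally I set $k = W^{1/p}$ and pick the label $y$ and threshold $\tau$ so that, by the characterization of Section~\ref{sec:EV}, the prediction flips exactly when the total gain reaches $V$.

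With this construction the $L_p$ reformulation of constraint~(2), namely $\sum_{i,j} z_{ij}\,||\dist(\vec{x},\lambda(s_{ij}))||_p^p \le k^p$, becomes $\sum_{h \in S} w_h \le W$, the objective~(1) becomes $\sum_{h \in S} v_h$, and constraint~(3) degenerates to ``include stump $h$ or not'', since every stump has a single non-trivial leaf. Hence a selection of weight at most $W$ and value at least $V$ exists iff $\Gamma \ge V$ iff the verifier reports \emph{not robust}. The map is clearly polynomial in the size of the knapsack instance, and since 0-1 knapsack decision is NP-complete, this establishes NP-hardness of robustness verification for every $p \in \N$.

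The step I expect to be the main obstacle is the case $p = 0$: there the cost of attacking a stump is forced to be $1$ regardless of the threshold gap, because $||\cdot||_0$ only counts perturbed coordinates, so a single stump can no longer encode an arbitrary weight $w_h$. The natural fix is to replace each stump by a short path reaching its high-gain leaf only after flipping $w_h$ distinct, tree-private features, so that its $L_0$-cost equals $w_h$; one must then re-check that these gadgets keep $T$ large-spread and, more delicately, that the overall encoding stays of polynomial size for $p=0$, which in turn constrains the admissible source instances. Establishing these two points — the large-spread property of the path gadgets and the polynomial size of the $p=0$ encoding — is the technically sensitive part of the argument, whereas the $p \in \N$ case above is otherwise routine.
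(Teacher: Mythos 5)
Your construction follows the same skeleton as the paper's proof: reduce from a subset-sum-type problem by building one stump per item, each splitting on its own private feature, so that the set of threshold pairs quantified in Definition~\ref{def:spread} is empty and the ensemble is vacuously large-spread for any budget; encode weights as perturbation costs and values as leaf-score gains; take $\iota$ to be the identity and place $\tau$ so the prediction flips exactly when the accumulated gain reaches the target. The paper reduces from Subset Sum with positive integers rather than from 0-1 knapsack, but that difference is immaterial, and for $p=1$ the two constructions essentially coincide. Where you genuinely diverge is the case $p \geq 2$: you set the feature gap to $w_h^{1/p}$ so that $||\dist(\vec{x},\cdot)||_p^p = w_h$ and constraint (2) of Problem~\ref{prob:optimization} linearizes exactly to $\sum_h z_h w_h \leq W$, whereas the paper keeps integral quantities and raises thresholds, scores and budget to $p$-th powers ($s_i^p$, $\tau = \mathbb{G}^p$, $k = \mathbb{G}^p$). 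Your calibration is the sounder route: since orthogonal per-tree perturbations satisfy $||\sum_j \vec{\delta}_j||_p^p = \sum_j ||\vec{\delta}_j||_p^p$, the paper's power trick does not actually linearize the budget constraint, and as far as I can check it breaks down (for $\mathbb{S}=\{3,3\}$, $\mathbb{G}=4$, $p=2$ the paper's ensemble admits an attack with cost $(2(9-\zeta)^2)^{1/2} \approx 12.3 \leq 16 = \mathbb{G}^p$ and gain $18 \geq 16 = \tau$, although no subset sums to $4$). The price of your route is that $w_h^{1/p}$ and $W^{1/p}$ are irrational: a valid reduction must specify how they are written down, e.g., truncate the gaps downward and the budget upward to polynomially many bits and invoke integrality of the weights to show the rounding never changes feasibility. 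You do not address this, and it is needed.

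A second gap applies to every $p \geq 1$: attainability at the split. Crossing a node from the closed side ($x_f \leq v$) to the open side ($x_f > v$) has an infimum cost that is not attained (the paper's $\varepsilon$), so if $\vec{x}$ sits in the zero-gain leaf on the closed side, your condition ``weight $\leq W$'' silently becomes ``weight $< W$'', which is fatal precisely on the tight instances that make knapsack and subset sum hard. You must either orient every stump so that $\vec{x}$ starts on the open side and the attack moves onto the closed side, where the minimum cost is attained exactly, or adopt the paper's device: place $\vec{x}$ at $\zeta = 1/(m+1)$ in every coordinate, accept per-tree costs $s_j - \zeta$, and recover exactness from integrality via $\mathbb{G} \leq \sum_j s_j < \mathbb{G}+1$.

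Finally, for $p=0$ your suspicion is well placed and cannot be patched within your sketch. A chain gadget of depth $w_h$ has size linear in the magnitude of $w_h$, hence exponential in its binary encoding, while restricting to polynomially bounded (unary) weights does not help because subset sum is then solvable in polynomial time. The paper's own $p=0$ reduction (a right chain with $s_i$ fresh features per integer) has exactly this unaddressed defect; moreover, for $L_0$ one may assume $k \leq d$ without loss of generality, so the dynamic program of Section~\ref{sec:solving} already runs in time polynomial in the input, putting the $p=0$ part of the theorem in tension with the paper's own algorithm. So treat the failure of your gadget at $p=0$ as a signal about the statement, not as a flaw in your argument for $p \in \N$.
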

\begin{proof}
We show a reduction from the Subset Sum Problem (SSP)~\cite{10.5555/574848} to robustness verification for large-spread boosted ensembles. We consider the variant of SSP where all inputs are positive, which is still NP-hard.
For simplicity, we define the reduction for the case $p = 1$ and we then explain how the construction can be generalized to an arbitrary $p \in \extendedN$.

Let $\mathbb{S}$ be a multiset of integers and let $\mathbb{G}$ be an integer, the goal of SSP is determining whether there exists any subset of $\mathbb{S}$ which sums to $\mathbb{G}$. Let $\mathbb{S} = \{s_1,\ldots,s_m\}$ be a multiset of $m$ integers and $\zeta = \frac{1}{m+1}$.
We construct a large-spread boosted ensemble $T$ with $m$ trees such that $T$ is robust against $A_{1,\mathbb{G}}$ on the instance $\vec{x} = \langle \zeta, \ldots, \zeta \rangle$ with true label $-1$ if and only if there does \emph{not} exist a subset of $\mathbb{S}$ which sums to $\mathbb{G}$. Each tree $t_i$ is a regression stump, i.e., a regression tree of depth 1, built on top of a different feature $i$. Specifically, $t_i = \sigma(i,s_i,\lambda(0),\lambda(s_i))$. We stipulate that the inverse link function $\iota$ used by the ensemble $T$ is the identity function and that the threshold $\tau$ is set to $\mathbb{G}$.

We have that $\pred{T}(\vec{x}) = 0 < \tau$, hence $T(\vec{x}) = -1$. This means that $T$ is robust on $\vec{x}$ if and only if there does not exist any $\vec{z} \in A_{1,\mathbb{G}}(\vec{x})$ such that $\pred{T}(\vec{z}) \geq \tau$. Since $T$ is large-spread, finding such $\vec{z}$ is equivalent to finding pairwise orthogonal perturbations $\vec{\delta}_1,\ldots,\vec{\delta}_m$ such that $||\sum_{i = 1}^m \vec{\delta}_i||_1 \leq \mathbb{G}$ and $\sum_{i = 1}^m t_i(\vec{x} + \vec{\delta}_i) \geq \tau = \mathbb{G}$. Let $\{\vec{\delta}_j\}_j$ be the multiset of the non-zero perturbations, we now observe the following inequalities.
\begin{enumerate}
\item $||\sum_{i = 1}^m \vec{\delta}_i||_1 = ||\sum_j \vec{\delta}_j||_1 = \sum_j ||\vec{\delta}_j||_1 = \sum_j (s_j - \zeta) \leq \mathbb{G}$
\item $\sum_{i = 1}^m t_i(\vec{x} + \vec{\delta}_i) = \sum_j t_j(\vec{x} + \vec{\delta}_j) = \sum_j s_j \geq \mathbb{G}$
\end{enumerate}

By using the first inequality, we obtain:
\[
\sum_j s_j \leq \mathbb{G} + \sum_j \zeta \leq \mathbb{G} + m \cdot \zeta < \mathbb{G} + 1.
\]
Combined with the second inequality, this leads to:
\[
\mathbb{G} \leq \sum_j s_j < \mathbb{G} + 1, 
\]
which implies $\sum_j s_j = \mathbb{G}$, because all the elements of the summation are integers. We conclude that $T$ is robust on $\vec{x}$ if and only if there does not exist $\{s_j\}_j$ such that $\sum_j s_j = \mathbb{G}$, i.e., SSP does not have a solution. Of course, this reduction operates in polynomial time.

The reduction generalizes to the case $p > 1$ by observing that, when working with large-spread ensembles, the norm of the sum of adversarial perturbations can be related to the sum of the their norms. In particular, for any set of pairwise orthogonal perturbations $\{\delta_i\}_i$ and any $p \in \N$, we have $|| \sum_i \vec{\delta}_i ||_p = ( \sum_i || \vec{\delta}_i ||_p^p )^{1/p}$. Hence, the reduction can be performed by making the following changes: set $t_i = \sigma(i,s_i^p,\lambda(0),\lambda(s_i^p))$, increase the maximum adversarial perturbation from $\mathbb{G}$ to $\mathbb{G}^p$, and similarly set the threshold $\tau = \mathbb{G}^p$. 

Finally, the case $p = 0$ uses a different reduction, but the underlying intuition remains the same. The constructed large-spread ensemble includes regression trees, rather than regression stumps. For each integer $s_i \in \mathbb{S}$ we generate $s_i$ features, call them $f^i_1,\ldots,f^i_{s_i}$. The corresponding regression tree $t_i$ is a right chain of depth $s_i$, where we inductively define the sub-tree rooted at layer $j$ as $t_j' = \sigma(f^i_j,1,\lambda(0),t_{j+1}')$ with the base case $t_{s_i}' = \sigma(f^i_{s_i},1,\lambda(0),\lambda(s_i))$. This way, a successful evasion attack against $t_i$ requires a perturbation $\vec{\delta}_i$ such that $||\vec{\delta}_i||_0 = s_i$ and the corresponding adversarial gain is $s_i$. Note that, in this setting, for each tree there is only one relevant leaf $\lambda(s_i)$ to consider, i.e., the leaf providing a positive adversarial gain $s_i$. Since we do not need to materialize every leaf in $t_i$, the reduction is performed in polynomial time. Correctness follows by the same argument used for the case $p = 1$.
\end{proof}

\section{Implementation}
\label{sec:implementation}
We here describe the implementation of {\tool}, our robustness verification tool for large-spread boosted ensembles. We then present a simple training algorithm for large-spread boosted ensembles, implemented on top of the popular LightGBM library. We will make our implementation available on GitHub upon paper acceptance.

\paragraph{\tool.}
{\tool} is a C++ implementation of the $EV$ algorithm presented in Section~\ref{sec:EV}, which uses the techniques in Section~\ref{sec:solving} to solve the underlying optimization problem. It currently supports large-spread boosted ensembles trained using LightGBM, but it can be easily extended to other input formats, e.g., to support XGBoost.

\paragraph{Training Algorithm.}
We implemented a training algorithm for large-spread boosted ensembles on top of the popular learning algorithm for boosted gradient tree ensembles available in LightGBM. In particular, we extended LightGBM to keep track of the thresholds used for each feature upon tree construction. Once the threshold $v$ has been chosen for feature $f$ in a tree $t_i$, all the other thresholds $v'$ for the same feature which would violate the large-spread condition (Definition~\ref{def:spread}) are not considered for node splitting when constructing the trees $t_j$ with $j > i$. Note that this may effectively prevent the reuse of $f$ within the next trees, in particular when all the other thresholds $v'$ are close to $v$. This ``black-listing'' method reduces, tree by tree, the available splitting options for creating new nodes. Eventually, this may force the training algorithm to stop the construction of new trees. In the experimental section we show that this worst-case scenario is not detrimental in terms of accuracy of the generated model.
\section{Experimental Evaluation}
\label{sec:experiments}
In this section we experimentally evaluate the distinctive traits of large-spread boosted ensembles with respect to accuracy and robustness. Moreover, we evaluate the primary objective of our large-spread models, i.e., we assess whether they are amenable to efficient robustness verification.

\subsection{Methodology}
Our experimental evaluation is performed on three public datasets: Fashion-MNIST \footnote{\url{https://www.openml.org/search?type=data\&sort=runs\&id=40996&status=active}} (FMNIST), MNIST\footnote{\url{https://www.openml.org/search?type=data\&sort=runs\&id=554}} and Webspam\footnote{\url{https://www.csie.ntu.edu.tw/~cjlin/libsvmtools/datasets/binary.html}}.
We reduce FMNIST and MNIST to binary classification tasks by considering two subsets of them. In particular, for FMNIST we consider the instances with class 0 (T-shirt/top) and 3 (Dress), while for MNIST we keep the instances representing the digits 2 and 6. The key characteristics of the chosen datasets are reported in Table~\ref{tab:datasets}. Each dataset is partitioned into a training set, validation set and a test set, using 55/15/30 stratified random sampling as customary. Additionally, we reduce the size of the Webspam test set to match the number of instances of the FMNIST test set using stratified random sampling.

We then use the following methodology:
\begin{enumerate}
    \item We perform hyper-parameter tuning to identify the best-performing boosted tree ensemble trained using LightGBM. Evaluation is performed on the validation set looking for the ensemble with highest accuracy. We train up to 500 regression trees, setting an early stopping criterion of 50 boosting rounds. We tune the number of leaves from the set $\{16,32,64,128,256\}$, setting the learning rate to 0.1.
    
    \item We perform the same hyper-parameter tuning to identify the best-performing large-spread boosted tree ensemble trained using our algorithm. We do this for different norms ($L_\infty,L_0,L_1$) and different magnitudes of the adversarial perturbation $k$. The value of $k$ is dataset-specific, because the data distribution inherently affects robustness, hence perturbations which are meaningful for a given dataset may be too weak or too strong for another dataset.
     
    \item We assess the accuracy and robustness of both the traditional boosted ensemble and our large-spread ensemble to understand how the enforced model restriction impacts on classic performance measures in different settings (different norms and magnitudes of adversarial perturbations).

    \item We finally assess the efficiency and scalability of robustness verification with respect to existing state-of-the-art tools for different norms.
\end{enumerate}

For all the experiments, we fix the inverse link function $\iota$ to the identity and we set the threshold $\tau$ to 0.

\paragraph{Baselines.}
Robustness verification for large-spread models can be efficiently performed using \tool, while traditional models without the large-spread restriction must be analyzed using existing verification tools. We consider two approaches as baselines.
For $L_\infty$ we consider SILVA~\cite{RanzatoZ20}, a state-of-the-art verification tool for tree ensembles. SILVA uses abstract interpretation and in particular the hyper-rectangle abstract domain to perform exact robustness verification against attackers based on the $L_\infty$-norm.
For $L_0$ and $L_1$, we instead consider an exact approach based on mixed integer linear programming (MILP)~\cite{KantchelianTJ16}.

Since SILVA and MILP may not scale on the entire test set, we enforce a timeout of 10 seconds per instance and we compute approximate values of robustness as required. In particular, when the verification tool goes in timeout, we consider its output as unknown, and we compute lower and upper bounds to robustness by considering such unknown instances as non-robust or robust respectively. We present approximate results using the $\pm$ notation.

\begin{table}[t]
    \centering
    \begin{tabular}{c|c|c|c}
    \toprule
    \textbf{Dataset} & \textbf{Instances} & \textbf{Features} & \textbf{Class Distribution} \\
    \midrule
    FMNIST & 13,866 & 784 & $50\% / 50\%$ \\
    MNIST & 14,000 & 784 & $51\% / 49\%$ \\
    Webspam & 350,000 & 254 & $70\% / 30\%$ \\
    \bottomrule
    \end{tabular}
\caption{Dataset statistics}
\label{tab:datasets}
\end{table}

\subsection{Accuracy and Robustness}
Table~\ref{tab:measures} compares the accuracy and the robustness of traditional GBDT models trained using LightGBM and our large-spread boosted ensembles trained using our algorithm.

\begin{table*}[t]
    \centering
    \begin{tabular}{c|c|c|c|c|c|c}
    \toprule
    \multirow{2}{*}{\textbf{Dataset}} & \multirow{2}{*}{\textbf{$p$}} & \multirow{2}{*}{\textbf{$k$}} & \multicolumn{2}{c}{\textbf{Accuracy}} & \multicolumn{2}{c}{\textbf{Robustness}} \\
    \cmidrule{4-7}
    & & & GBDT & Large-Spread & GBDT & Large-Spread \\
    \midrule
    \multirow{9}{*}{FMNIST} & \multirow{3}{*}{$\infty$} & 0.005 & \revise{0.97} & \revise{0.97} & \revise{0.93 $\pm$ 0.01} & \revise{0.96}\\
    & & 0.01 & \revise{0.97} & \revise{0.97} & \revise{0.81 $\pm$ 0.12} & \revise{0.89} \\
    & & 0.015 & \revise{0.97} & \revise{0.97} & \revise{0.68 $\pm$ 0.21} & \revise{0.88} \\
    \cmidrule{2-7}
    & \multirow{3}{*}{0} & 1 & \revise{0.97} & \revise{0.96} & \revise{0.94} & \revise{0.96} \\
    & & 2 & \revise{0.97} & \revise{0.96} & \revise{0.90} & \revise{0.90}\\
    & & 3 &  \revise{0.97} & \revise{0.96} & \revise{0.86} & \revise{0.85}\\
    \cmidrule{2-7}
    & \multirow{3}{*}{1} & 0.05 & \revise{0.97} & \revise{0.97} & \revise{0.87} & \revise{0.89}\\
    & & 0.1 & \revise{0.97} & \revise{0.96} & \revise{0.81} & \revise{0.83}\\
    & & 0.15 & \revise{0.97} & \revise{0.96} & \revise{0.72} & \revise{0.66}\\
    \midrule
    \multirow{9}{*}{MNIST} & \multirow{3}{*}{$\infty$} & 0.01 & \revise{0.99} & \revise{0.99} & \revise{0.98} & \revise{0.99} \\
    & & 0.02 & \revise{0.99} & \revise{0.99} & \revise{0.93 $\pm$ 0.02} & \revise{0.98} \\
    & & 0.03 & \revise{0.99} & \revise{0.99} & \revise{0.85 $\pm$ 0.10} & \revise{0.97} \\
    \cmidrule{2-7}
    & \multirow{3}{*}{0} & 1 & \revise{0.99} & \revise{0.99} & \revise{0.77 $\pm$ 0.21} & \revise{0.99} \\
    & & 2 & \revise{0.99} & \revise{0.99} & \revise{0.67 $\pm$ 0.21} & \revise{0.98}\\
    & & 3 & \revise{0.99} & \revise{0.99} & \revise{0.44 $\pm$ 0.21} & \revise{0.96}\\
    \cmidrule{2-7}
    & \multirow{3}{*}{1} & 0.05 & \revise{0.99} & \revise{0.99} & \revise{0.87 $\pm$ 0.05} & \revise{0.99} \\
    & & 0.1 & \revise{0.99} & \revise{0.99} & \revise{0.86 $\pm$ 0.04} & \revise{0.96} \\
    & & 0.15 & \revise{0.99} & \revise{0.99} & \revise{0.84 $\pm$ 0.04} & \revise{0.84}\\
    \midrule
    \multirow{9}{*}{Webspam} & \multirow{3}{*}{$\infty$} & 0.0004 & \revise{0.99} & \revise{0.99} & \revise{0.90 $\pm$ 0.08} & \revise{0.97} \\
    & & 0.0006 & \revise{0.99} & \revise{0.99} & \revise{0.84 $\pm$ 0.13} & \revise{0.97} \\
    & & 0.0008 & \revise{0.99} & \revise{0.99} & \revise{0.79 $\pm$ 0.18} & \revise{0.96} \\
    \cmidrule{2-7}
    & \multirow{3}{*}{0} & 1 & \revise{0.99} & \revise{0.96} & \revise{-} & \revise{0.91} \\
    & & 2 & \revise{0.99} & \revise{0.96} & \revise{-}& \revise{0.86}\\
    & & 3 & \revise{0.99} & \revise{0.96} & \revise{-} & \revise{0.79}\\
    \cmidrule{2-7}
    & \multirow{3}{*}{1} & 0.002 & \revise{0.99} & \revise{0.97} & -  & \revise{0.95} \\
    & & 0.003 & \revise{0.99} & \revise{0.96} & -  & \revise{0.93} \\
    & & 0.004 & \revise{0.99} & \revise{0.96} & - & \revise{0.89}\\
    \bottomrule
    \end{tabular}
\caption{Accuracy and robustness (against $A_{p,k}$) measures for traditional GBDT models and large-spread boosted ensembles. Robustness of GBDT over Webspam for the $p \in \{0,1\}$ case is omitted, because MILP does not scale to those models.}
\label{tab:measures}
\end{table*}

\paragraph{Results for $L_\infty$-Attackers.}
We observe that the large-spread condition preserves the accuracy of the trained models: in all the cases, the accuracy of large-spread boosted ensembles is equivalent to the accuracy of GBDT models.
Although prior work already showed that large-spread ensembles can be accurate, some performance degradation was observed~\cite{CalzavaraCPP23}. Our use of boosting further relaxes the practical limitations of the large-spread condition, likely because the trees of a boosted ensemble are specifically trained to compensate the weaknesses of their predecessors, yielding models which are equivalent to the state of the art. In turn, the large-spread condition is beneficial to robustness verification. Indeed, a state-of-the-art tool like SILVA can only provide approximate robustness bounds, because verification takes too much time. The bounds are rather large and make robustness verification unreliable: for many cases, the uncertainty is above $\pm 0.10$. For example, in the case of the Webspam dataset with the highest perturbation, robustness ranges between 0.61 and 0.98. This gap is unacceptable for a credible security verification. \tool{} is instead able to establish exact robustness bounds for our large-spread models. Observe that, in all cases, the robustness of large-spread models is close to the most optimistic robustness estimate of the GBDT models and quite close to 1 in general.

\paragraph{Results for $L_0$-Attackers.}
In this case, the large-spread restriction introduces a very limited accuracy loss with respect to the GBDT models. The highest loss is on the Webspam dataset, where the large-spread models sacrifice 0.03 of accuracy. This is definitely acceptable, because their accuracy is still 0.96, which is close to 1. As to robustness, we observe that MILP struggles against the analyzed models. In particular, for the Webspam dataset it is completely unable to provide a reasonable robustness estimate, because more than 90\% of the instances go in timeout after 10 seconds. Again, \tool{} is able to establish exact robustness bounds very efficiently in all cases, showing that the robustness of the large-spread models is reasonably high. In particular, even against the strongest attacker, their robustness is still above the frequency of the majority class (0.70). For the other two datasets, the robustness of large-spread models is either comparable to the most optimistic robustness estimate of the GBDT models or even much better. The major improvement is for the MNIST dataset, where robustness against the strongest attacker increases from 0.65 (at best) to 0.96 when moving from GBDT models to large-spread models.

\paragraph{Results for $L_1$-Attackers.}
Also the evaluation against $L_1$-attackers is largely positive. Again the large-spread condition does not enforce any significant loss of accuracy compared to GBDT models, because the highest loss is 0.03 (again on the Webspam dataset). The robustness of the large-spread models is generally higher than the robustness of GBDT models or equal to the most optimistic estimate, with just a few exceptions. These cases happen when the best-performing large-spread model is noticeably smaller than the best-performing GBDT model, hence empirically less robust against evasion attacks. This can be easily voided by integrating robustness verification directly in the hyper-parameter tuning pipeline of large-spread models, thus looking for the model showing highest robustness or the best trade-off between accuracy and robustness. Note that this cannot be easily done for GBDT models, because robustness verification does not always scale.

\subsection{Performance and Scalability}
We perform some additional experiments to assess the benefits on robustness verification enabled by our large-spread boosted ensembles.

\paragraph*{Performance.}
In the first experiment, we train a traditional ensemble of the same size (number of trees and nodes) of our best-performing large-spread ensemble and we compare the robustness verification times of SILVA (for $L_\infty$) and MILP (for $L_0,L_1$) against \tool. We limit the analysis to 500 instances of the test set sampled using stratified random sampling, because our competitors often take too much time to run.\footnote{Using the full test set can only improve the picture in favour of \tool, in particular for the Webspam dataset, whose full test set includes around 100,000 instances; this sheer size would magnify the difference in the measured verification times.}
The results are shown in Table~\ref{tab:time}.

The first observation we make is that \tool{} is significantly faster than SILVA in the vast majority of cases. For example, for the FMNIST dataset and the highest perturbation, the verification times of \tool{} are three orders of magnitude lower than SILVA (4 seconds vs. 3,448 seconds). The same happens for the Webspam dataset, where \tool{} takes 4 seconds while SILVA takes 2,142 seconds for the highest perturbation. The MNIST dataset shows a different trend, because SILVA is very efficient there. We do not have a clear explanation for this observation, except that even NP-hard problems can be easy to solve in specific cases and for some reason the MNIST model seems straightforward to verify. However, we stress that SILVA does not provide formal complexity bounds, but reuses an abstract interpretation engine hoping that it scales to the NP-hardness of robustness verification. This is the case for MNIST, but the other two datasets make SILVA struggle. Finally, we discuss the comparison with MILP: again, \tool{} always performs better than its competitor. For example, for $L_0$-attackers on the Webspam dataset, verification time reduces from 902 seconds to 2 seconds, i.e., a reduction of two orders of magnitude. A similar picture can be drawn for $L_1$-attackers.

\begin{table}[t]
    \centering
    \begin{tabular}{c|c|c|c|c}
    \toprule
    \multirow{2}{*}{\textbf{Dataset}} & \multirow{2}{*}{\textbf{$p$}} & \multirow{2}{*}{\textbf{$k$}} & \multicolumn{2}{c}{\textbf{Verification Time}}\\
    \cmidrule{4-5}
    & & & Baseline & CARVE-GBM\\
    \midrule
    \multirow{9}{*}{FMNIST} & \multirow{3}{*}{$\infty$} & 0.005 & \revise{172} & \revise{12} \\
    & & 0.010 & \revise{290} & \revise{3} \\
    & & 0.015 & \revise{3,448} & \revise{4} \\
    \cmidrule{2-5}
    & \multirow{3}{*}{$0$} & 1 & \revise{23} & \revise{1} \\
    & & 2 & \revise{23} & \revise{1} \\
    & & 3 & \revise{23} & \revise{1} \\
    \cmidrule{2-5}
    & \multirow{3}{*}{$1$} & 0.05 & \revise{74} & \revise{8} \\
    & & 0.10 & \revise{37} & \revise{4} \\
    & & 0.15 & \revise{39} & \revise{4} \\
    \midrule
    \multirow{9}{*}{MNIST} & \multirow{3}{*}{$\infty$} & 0.01 & \revise{1} & \revise{13} \\
    & & 0.02 & \revise{4} & \revise{4} \\
    & & 0.03 & \revise{3} & \revise{3} \\
    \cmidrule{2-5}
    & \multirow{3}{*}{$0$} & 1 & \revise{23} & \revise{1} \\
    & & 2 & \revise{23} & \revise{1} \\
    & & 3 & \revise{23} & \revise{1} \\
    \cmidrule{2-5}
    & \multirow{3}{*}{$1$} & 0.05 & \revise{157} & \revise{8} \\
    & & 0.10 & \revise{42} & \revise{3} \\
    & & 0.15 & \revise{46} & \revise{4} \\
    \midrule
    \multirow{9}{*}{Webspam} & \multirow{3}{*}{$\infty$} & 0.0004 & \revise{216} &  \revise{5} \\
    & & 0.0006 & \revise{989} & \revise{4} \\
    & & 0.0008 & \revise{2,142} & \revise{4} \\
    \cmidrule{2-5}
    & \multirow{3}{*}{$0$} & 1 & \revise{902} & \revise{2} \\
    & & 2 & \revise{902} & \revise{2} \\
    & & 3 & \revise{902} & \revise{2} \\
    \cmidrule{2-5}
    & \multirow{3}{*}{$1$} & 0.002 & \revise{278} & \revise{5} \\
    & & 0.003 & \revise{179} & \revise{4} \\
    & & 0.004 & \revise{424} & \revise{5} \\
    \bottomrule
    \end{tabular}
\caption{Robustness verification times (in seconds) for the first 500 instances in the test set.}
\label{tab:time}
\end{table}

\begin{figure}[t]
    \includegraphics[width=.42\textwidth]{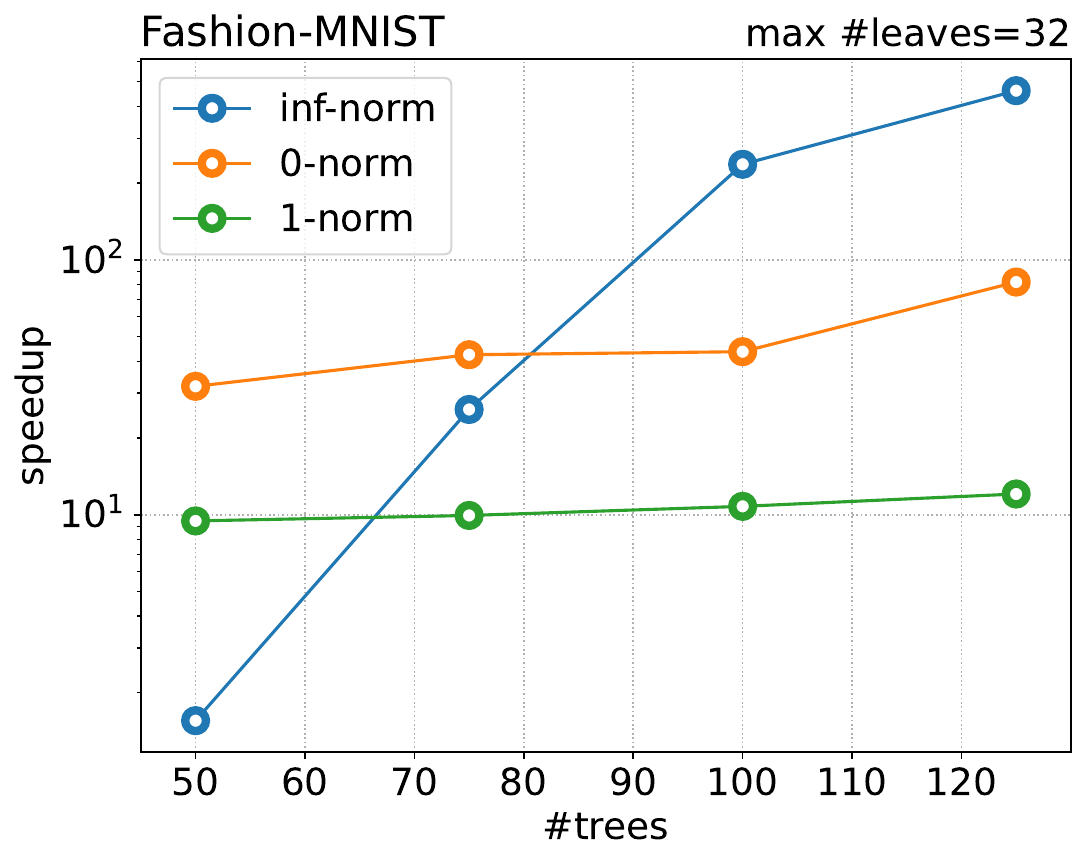}
  \caption{Speedup of the robustness verification time enabled by the use of \tool{} over competitors.
  }
  \label{fig:efficiency}
\end{figure}

\paragraph*{Scalability.}
In our second experiment, we assess the scalability of robustness verification by increasing the model size. In particular, we set the maximum number of leaves to 32 and we vary the number of trees in the ensemble trained on the FMNIST dataset to understand the trend of the robustness verification times. Results are shown in Figure~\ref{fig:efficiency}, where we plot the speedup of CARVE-GBM over competitors. As we can see, the speedup ranges from around 10 times to more than 400 times for the largest ensembles of 125 trees, i.e., verification times are reduced by at least one order of magnitude. Remarkably, this positive finding is also artificially penalized by the enforcement of a 30 seconds timeout per instance on SILVA and MILP. If these tools were allowed to run for a longer time, the comparison would be even more in favor of \tool, which is able to analyze each instance in less than one second.
\section{Related Work}
We already discussed that prior work studied the complexity of robustness verification for decision tree ensembles~\cite{KantchelianTJ16,WangZCBH20}. This problem is NP-complete for arbitrary $L_p$-norm attackers, even when considering decision stump ensembles~\cite{Andriushchenko019,WangZCBH20}. Despite this negative result, prior work proposed a plethora of different approaches to robustness verification for tree ensembles~\cite{TornblomN20,KantchelianTJ16,ChenZS0BH19,RanzatoZ20, CalzavaraFL20,DevosMD21, EinzigerGSS19, SatoKNO20}. Though effective in many cases, these techniques are bound to fail for large ensembles and complex datasets. We experimentally showed that state-of-the-art verification tools do not scale and are much less efficient than our approach based on large-spread ensemble training; thus, we generalize preliminary results on verifiable learning for simple random forest models~\cite{CalzavaraCPP23} to the gradient boosting setting.

Numerous studies in the literature have delved into novel algorithms for training tree ensembles that exhibit robustness against evasion attacks~\cite{CalzavaraLTAO20, ChenZBH19, KantchelianTJ16, Andriushchenko019, VosV21, RanzatoZ21, GuoTGZ22, VosV22-1, VosV22-2, Chen0JCJ21}. However, our work complements these endeavors. Our primary objective does not center around enforcing robustness; rather, robustness may emerge as a byproduct of our training algorithm. Our focus lies in facilitating efficient robustness verification for the trained models. Combining our approach with existing robust training algorithms may further improve their performance against evasion attacks.


It is essential to note that our work specifically addresses the classic definition of robustness, termed \emph{local} robustness in recent literature discussions on global robustness and related properties~\cite{Chen0QLJW21,CalzavaraCLMO22,LeinoWF21}. The aim of this line of research is to achieve security verification independent of the selection of a specific test set, thereby enhancing the credibility of security proofs. While acknowledging the popularity and relative ease of dealing with local robustness, we leave the extension of our framework to global robustness for future work.

Significant efforts have been invested in the robustness verification of deep neural networks (DNNs). Traditional approaches for exact verification, often based on Satisfiability Modulo Theories (SMT)\cite{KatzBDJK17, KatzHIJLLSTWZDK19, HuangKWW17} and integer linear programming\cite{BastaniILVNC16, LomuscioM17, TjengXT19, DuttaJST18}, frequently face scalability issues with large DNNs, similar to tree ensembles. To address these challenges, various proposals, such as pruning the original DNN~\cite{GuidottiLPT20} and identifying specific classes of DNNs for more efficient robustness verification~\cite{JiaR20}, have been presented. Xiao et al.~\cite{XiaoTSM19} introduced the concept of \textit{co-designing} model training and verification, emphasizing training models that balance accuracy and robustness to facilitate exact verification. While prior techniques offer empirical efficiency guarantees, our proposal stands out by providing a formal complexity reduction through the development of a polynomial time algorithm. Furthermore, our research focuses on tree ensembles rather than DNNs.

Lastly, recent work explored the adversarial robustness of model ensembles~\cite{YangLXK0L22}. The key finding of this study demonstrated that a combination of ``diversified gradient'' and ``large confidence margin'' serves as sufficient and necessary conditions for certifiably robust ensemble models. While this result may not directly apply to non-differentiable models like decision tree ensembles, the idea of diversifying models aligns with our large-spread condition. We plan to explore potential connections with this proposition in future work.
\section{Conclusion}
In this work, we generalized existing endeavours on verifiable learning from simple ensembles based on hard majority voting to state-of-the-art boosted ensembles, such as those trained using LightGBM~\cite{lightgbm}. We formally characterized robustness verification for large-spread boosted ensembles in terms of an optimization problem and we proposed efficient techniques to solve it in practical cases. We experimentally showed on public datasets that our verifiable learning techniques allows one to train models offering state-of-the-art accuracy, while being amenable to efficient robustness verification. Our analysis also confirmed that robustness verification for traditional tree ensembles does not scale when increasing the model size, thus reinforcing the importance of verifiable learning.

As future work, we would like to investigate different approaches to verifiable learning for tree ensembles, besides the large-spread condition advocated in existing work. Moreover, we plan to study how verifiable learning can be applied to fundamentally different classes of machine learning models, such as DNNs.

\bibliographystyle{plain}
\bibliography{main}

\end{document}